\newcommand{\calG}{\mathcal{G}}
\newcommand{\calV}{\mathcal{V}}
\newcommand{\calE}{\mathcal{E}}
\newcommand{\bszero}{\mathbf{0}}
\newcommand{\bse}{\mathbf{e}}
\newtheorem{theorem}{Theorem}[section]
\begin{document}

%%
%% The "title" command has an optional parameter,
%% allowing the author to define a "short title" to be used in page headers.
\title{Fake News Quick Detection on Dynamic Heterogeneous Information Networks}

%%
%% The "author" command and its associated commands are used to define
%% the authors and their affiliations.
%% Of note is the shared affiliation of the first two authors, and the
%% "authornote" and "authornotemark" commands
%% used to denote shared contribution to the research.
\author{Jin Ho Go, Alina Sari, Jiaojiao Jiang, Shuiqiao Yang, Sanjay Jha}
% \authornote{Both authors contributed equally to this research.}
\email{{jinho.go, a.sari, jiaojiao.jiang, shuiqiao.yang, sanjay.jha}@unsw.edu.au}
% \orcid{1234-5678-9012}
% \author{G.K.M. Tobin}
% \authornotemark[1]
% \email{webmaster@marysville-ohio.com}
\affiliation{%
  \institution{School of Computer Science, University of New South Wales}
%   \streetaddress{P.O. Box 1212}
  \city{Sydney}
  \state{NSW}
  \country{Australia}
  \postcode{2052}
}

%%
%% By default, the full list of authors will be used in the page
%% headers. Often, this list is too long, and will overlap
%% other information printed in the page headers. This command allows
%% the author to define a more concise list
%% of authors' names for this purpose.
% \renewcommand{\shortauthors}{Trovato and Tobin, et al.}

%%
%% The abstract is a short summary of the work to be presented in the
%% article.

% did experiments on multiple types of network, and examined strcture feacture chances, relstionshio between resillience and structural features. talk about resilliance

%------------------------------------------------------------------------------------------
\begin{abstract}
The spread of fake news has caused great harm to society in recent years. So the quick detection of fake news has become an important task. Some current detection methods often model news articles and other related components as a \textit{static} heterogeneous information network (HIN) and use expensive message-passing algorithms. However, in the real-world, quickly identifying fake news is of great significance and the network may vary over time in terms of dynamic nodes and edges. Therefore, in this paper, we propose a novel Dynamic Heterogeneous Graph Neural Network (DHGNN) for fake news quick detection. More specifically, we first implement BERT and fine-tuned BERT to get a semantic representation of the news article contents and author profiles and convert it into graph data. Then, we construct the heterogeneous news-author graph to reflect contextual information and relationships. Additionally, we adapt ideas from personalized PageRank propagation and dynamic propagation to heterogeneous networks in order to reduce the time complexity of back-propagating through many nodes during training. Experiments on three real-world fake news datasets show that DHGNN can outperform other GNN-based models in terms of both effectiveness and efficiency.
%In particular, our model use temporal attention to learn temporal information and use hierarchical attention to learn the importance of different meta-paths. Then the model can generate node embedding and classify news article nodes. Extensive experiments has been done on two real-world datasets. The experimental results not only demonstrate the good performance of our model by comparison with state-of-the-art baselines, but also show its potential good interpretation for dynamic heterogeneous graph analysis.

\end{abstract}

%%
%% The code below is generated by the tool at http://dl.acm.org/ccs.cfm.
%% Please copy and paste the code instead of the example below.
%%
%%
%% Keywords. The author(s) should pick words that accurately describe
%% the work being presented. Separate the keywords with commas.
\keywords{fake news, quick detection, dynamic heterogeneous network}

\maketitle

%------------------------------------------------------------------------------------------
\section{Introduction}
\label{sec:intro}
With the advancement of technology and fast dissemination of information, the issue of fake news has also escalated, e.g., the 2016 U.S. presidential election. The discredit caused by misinformation results in unrest and confused public opinions regarding essential subjects, which could endanger the stability and harmony in society\cite{MM21}.
\begin{figure}
    \centering
    \includegraphics[scale=0.42]{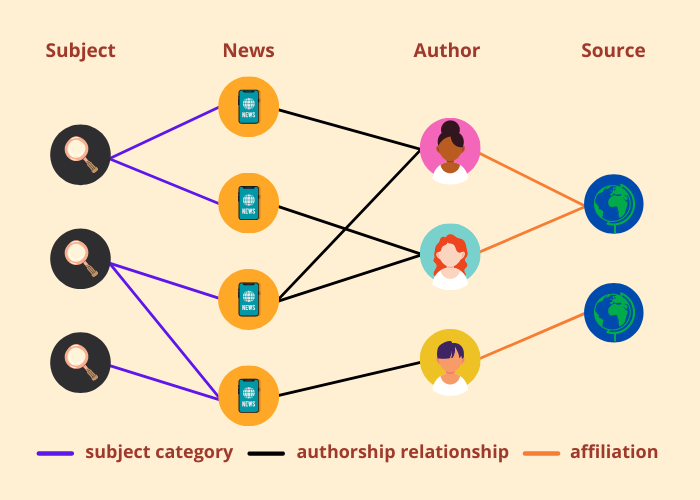}
    \caption{An illustrative example of a heterogeneous information network with four types of nodes (i.e., subject, news, author and source) and three types of links (i.e., subject-news, news-author, and author-source).}
    % \caption{Heterogeneous news-author graph with subject\\ and source contextual information and relationships}
    \label{fig:hetero_news_graph_og}
\end{figure}

In the past, we require expert fact-checkers to ensure the veracity of a piece of news. However, this process is time-consuming and expensive, which is infeasible in a globalized world with millions of online social media users posting and sharing news every day. Therefore, there have been some serious studies on identifying fake news using an automated system with the help of artificial intelligence. The early efforts include the application machine learning models, such as support vector machines (SVMs), decisions trees, naive bayes and random forest, in classifying news in a data set. The results often vary depending on the domain and data set. If interested, \cite{FakeNewsDetSurvey} summarizes the performance of some traditional machine learning models on the fake news detection task. 

Deep learning methods such as convolutional neural networks (CNNs) and recurrent neural networks (RNNs) are also employed in the news classification task. The general method begins by extracting the news content and converting it into a text embedding, which is then passed into a convolutional and/or recurrent networks, whose output indicates whether the news is fake. An early example of such method is using CNN for sentence classification, as shown in \cite{CNNSentenceClassification}. 

More recently, there has been an extension of CNN to non-Euclidean graph-structured data known as graph neural network (GNN), pioneered by the introduction of graph convolutional network in e.g., \cite{GCN}. This generalisation allows for many real-world applications such as social networks, bio-molecular graphs, automated traffic systems and mis(dis)information detection, whose data is often large, irregularly-shaped and heterogeneous, to be represented as a graph. 
Figure \ref{fig:hetero_news_graph_og} shows an example of a heterogeneous news graph where the nodes consist of news articles, creators and subjects, and the edges consist of news-creator and news-subject relationships. This allows for more contextual information to be considered in the model, which can potentially improve the quality in predicting misinformation than homogeneous graph\cite{info_cred}. There are some recent studies on employing heterogeneous networks for fake news detection, see e.g., \cite{FANG, HGAT}.

A different approach to this problem is by observing the dissemination of the news after they are released. This can be achieved by monitoring the news propagation through users who share the news with their friends/followers on social media. The principle of this method is that fake news tends to propagate faster and to a broader population compared to real news \cite{news_spread, news_prop}. However, this method requires the news to be spread to some people before it is able to make a prediction, in which the damage has been done. 

To handle this issue, we propose a novel dynamic heterogeneous attention network, namely Dynamic Heterogeneous Graph Neural Network (DHGNN). DHGNN aims to detect fake news in a timely manner when an incoming piece of news is just released. Although there have been some studies on early fake news detection, see e.g., \cite{early_det}, there are not many kinds of research that considered timely fake news detection with the heterogeneity of the news graph data. For DHGNN, we first implement BERT and finetuned BERT to get a semantic representation of the news article contents and author profiles and convert it into graph data. Then, we construct the heterogeneous news-author graph to reflect contextual information and relationships. Additionally, we adapt ideas from Personalized PageRank propagation in \cite{PPNP} and dynamic propagation in \cite{SDG} to heterogeneous networks in order to reduce the time complexity of back-propagating through many nodes during training.

The structure of this paper is as follows. We introduce related works, which are the state-of-art on fake news detection, in Section \ref{sec:related}. Then, the proposed model is described in Section \ref{sec:prop_model} by going through an overview of the network architecture before focusing on specific parts of the process, such as the data extraction section and the propagation scheme. We compare the training time and prediction quality of our model with some existing state-of-the-art homogeneous and heterogeneous GNNs in Section \ref{sec:experiments}, before finally concluding our studies and findings along with some prospects in the research field in Section \ref{sec:conclusion}.

% The structure of this report is as follows. We begin in Section \ref{sec:data_sets} by discussing and comparing some well-known news-checking platforms and curated fake news data sets that have been compiled and collected in previous studies on fake news detection, propagation and stance detection. Then, the proposed model is described in Section \ref{sec:prop_model} by going through an overview of the network architecture, before focusing on specific parts of the process such as the data extraction (Section \ref{sec:data_extract}) and the propagation scheme (Section \ref{sec:prop}). We compare the training time and prediction quality of our model with some existing state-of-the-art homogeneous and heterogeneous GNNs in Section \ref{sec:experiments}, before finally concluding our studies and findings along with some future prospects in the research field in Section \ref{sec:conclusion}. Note that this research includes computations using the computational cluster Katana supported by Research Technology Services at UNSW Sydney.

%------------------------------------------------------------------------------------------
\section{Related Work}
\label{sec:related}
In this section, we present state-of-art Heterogeneous Graph Neural Networks and Dynamic Graph Neural Networks for fake news detection. Also, we examine the limitation of two different approaches.

\subsection{Heterogeneous Graph Neural Networks for fake news detection}
Heterogeneous Information Networks(HINs) are more suitable for representing complex relationships in the real world, compared
to homogeneous network which consists of only one type of node and edge. In particular, HINs are useful in fake news detection because they can express the social context that occurs through the interaction between users and news on social media, which are widely used worldwide due to the development of the mobile Internet.

To extract abundant social interconnection between entities on social networks, Wang et al. \cite{FANG} suggest graph representation learning for detecting fake news, instead of transductive learning. Unlike transductive embedding frameworks, which rely on neighbourhood information only, GraphSage\cite{hamilton2017inductive} makes use of auxiliary node attribute, as well as proximity sampling, and is implemented to represent HINs of relationship between entities on social platform. By leveraging the underlying social context, this inductive graph learning model improves the quality of representation with minimum training data. 

FAKEDETECTOR\cite{zhang2020fakedetector} is designed to handle not only the credibility of the news but also the author and subjects at the same time. Explicit and latent features extracted through representation feature learning are used to model the correlation of entities in a gated diffusive unit(GDU)\cite{chung2014empirical}. GDU, which can handle multiple inputs simultaneously, can effectively form HINs of social networks.

AA-HGNN\cite{ren2020adversarial} is a heterogeneous GNN which uses a hierarchical attention mechanism for node representation learning.This model specializes in detecting fake news at publication using adversarial active learning framework. Similar to active learning, adversarial active learning employs a classifier and a selector, which work hand-in-hand in improving the quality of predicted labels. The addition of the adversarial learning concept from General Adversarial Network (GAN)\cite{goodfellow2014generative} helps with the limited training data in real-world.

Heterogeneous graph neural networks reflect more prosperous relationships between different types of nodes through HINs. However, since they are all based on a static graph, it is difficult to track the relationship between nodes over time. Thus, they are not very suitable for real-time fake news detection.

\subsection{Dynamic Graph Neural Networks for fake news detection}

A static graph has a fixed structure(nodes and edges) and parameters\cite{monical2014static}. On the other hand, a dynamic graph is a graph that reflects changes caused by time or changes in environmental elements such as temperature\cite{monical2014static}. Dynamic graphs are more suitable for modelling real-world problems that are time- sensitive or prompt to changes\cite{song2021temporally}. However, compared to static graph, it is complicated to handle, so many studies are being conducted to deal with dynamic graph.

Song et al.\cite{song2021temporally} indicates the limitations of GNNs models based on static graphs. A continuous-time dynamic GNN, called TGNF, has been proposed to overcome this limitation. It combines the the temporal graph attention neural networks(TGAT) model with a graph convolutional layer to capture the evolution pattern of news propagation in continuous time. However, as a disadvantage, the model took a relatively long execution time compared to other comparative models.

SDG is a GNN that is dynamic and homogeneous\cite{fu2021sdg}. It uses a dynamic propagation scheme because the influence of other nodes on a particular node in a k-layer GNN model is essentially a k-step random walk from that particular node. As the graph structure changes, it was proven that the push-out algorithm used to update the model converges to the stationary distribution of the updated graph. Hence, SDG can find the new dynamic propagation matrix without an expensive cost using the push-out algorithm.

Dynamic GNN models are handy for capturing real-world fluid relationship shifting between nodes over time. However, in most cases, the dynamic structure is captured based on a homogeneous graph, so there are limitations in expressing the various relationships generated by different types of nodes.

%------------------------------------------------------------------------------------------
\section{Problem Statement} 
\label{sec:problem}

\subsection{Terminology Definition}
We describe the the definition of Heterogeneous Information Networks(HINs) on our model as follows:

\textit{DEFINITION 1:} (Heterogeneous Information Networks(HINs)): The news-author based heterogeneous information networks(HINs) is denoted as $G = (V,E)$. $V$ represents the node-set $V = A\cup N \cup S \cup R$, which is $A$ means the set of authors, $N$ means the set of news articles, $S$ means the set of subjects and $R$ represents the set of sources. These three types of nodes are covered in detail later. $E$ denotes the relationships among author and news nodes. There are three types of relationships that are considered: the authorship relation $E_{an}$ between author and news nodes, the affiliation relation $E_{aa}$ between authors, and the subject category relation $E_{nn}$ between news articles. 

\textit{DEFINITION 2:} The word graph $\calG = (\calV, \calE)$ represents each news article and author with $\calV$ denoting each word in the article content and $\calE$ denoting the proximity of one word from another. More specifically, given a  positive integer $q$, if word $\calV_1$ lies within $q$ units from $\calV_2$, then $\calE_{1,2}$ = 1.

Author refers to people who post news article on social media. To define author in formal method, we are described as follows:

\textit{DEFINITION 3:} (Authors): It is the set of authors and represented as $A =\{a_{1}, a_{2}, a_{3}, ... , a_{l}\}$. Each author entity is $a_{i}\in A$, which includes the profile textual.

News article is defined as text content posted by author. To define news article in formal method, we are described as follows:

\textit{DEFINITION 4:}(News Articles): It is the set of news articles and represented as $N =\{n_{1}, n_{2}, n_{3}, ... , n_{m}\}$. Each news article entity is $n_{i}\in N$, which includes news content textual. The reliability of $n_{i}$ is labelled from the label-set $Y = \{Real, Fake\}$.

Subject denotes common topic for news article. To define news article in formal method, we are described as follows:

\textit{DEFINITION 5:} (Subjects): It is the set of subjects and represented as $S =\{s_{1}, s_{2}, s_{3}, ... , s_{n}\}$. Each subject entity is $s_{i}\in S$, which contains topic area textual.

Source means the platform used to post news article.To define source in formal method, we are described as follows:

\textit{DEFINITION 6:} (Sources): It is the set of source and represented as $R =\{r_{1}, r_{2}, r_{3}, ... , r_{o}\}$. Each subject entity is $r_{i}\in R$, which contains name of source textual.

\subsection{Problem Formulation}

The problem we are interested in is to learn a classification function $f:N \rightarrow Y$ on predicting the veracity of an incoming news article $n_{i}$ in a timely manner by modelling the task as a heterogeneous graph $G = (V,E)$ classification problem. The heterogeneity of the graph takes into account contextual information such as author, subject and source for each news article and the heterogeneous relationships among the graph nodes (Figure \ref{fig:hetero_news_graph_og}), whereas the timeliness of the prediction comes from the use of dynamic propagation, which will be described in Section \ref{sec:prop}. To address the above issues, we introduce the \textbf{D}ynamic \textbf{H}eterogeneous \textbf{G}raph \textbf{N}eural \textbf{N}etwork (\textbf{DHGNN}), a novel GNN model which has the following features:
\begin{enumerate}
    \item the use of BERT and finetuned BERT to get a good semantic representation of the news article contents and author profiles.
    \item the separation of the model classifier and its propagation scheme allows a wide range of nodes to be considered with a low number of layers.
    \item dynamic propagation scheme, which is derived from a variant of the personalized pagerank algorithm, allows for timely prediction to be made.
\end{enumerate}
The new contributions we made include:
\begin{enumerate}
    \item The convergence and existence proof of two-hop and mixed (one-hop and two-hop) dynamic propagation schemes.
    \item The application of two-hop and mixed dynamic propagation schemes to bipartite and heterogeneous agnostic models.
\end{enumerate}
%The word graph $\calG = (\calV, \calE)$ represents each news article and author with $\calV$ denoting each word in the article content and $\calE$ denoting the proximity of one word from another. More specifically, given a  positive integer $q$, if word $\calV_1$ lies within $q$ units from $\calV_2$, then $\calE_{1,2}$ = 1. %Figure \ref{fig:word_graph} illustrates a word graph of news content and Figure \ref{fig:hetero_news_graph} shows an overview of the heterogeneous news-author graph. 
%In addition to the heterogeneous model, we investigate a bipartite model consisting of only the authorship relation between news and author, which, similar to the heterogeneous model, is called the \textbf{D}ynamic \textbf{B}ipartite \textbf{G}raph \textbf{N}eural \textbf{N}etwork (\textbf{DBGNN}). This is done to observe the importance of subject and source information to our model's prediction.

%------------------------------------------------------------------------------------------
\section{Proposed Model and Hypothesis} \label{sec:prop_model}

% ----- PROPOSED MODEL -----
\subsection{Overview: Proposed Model}

We introduce the \textbf{D}ynamic \textbf{H}eterogeneous \textbf{G}raph \textbf{N}eural \textbf{N}etwork (\textbf{DHGNN}), a novel GNN model.

%We model the fake news detection problem as a heterogeneous graph $G = (V,E)$ where $V$ denotes the word graph $\calG$ of news content or author profile, and $E$ denotes the relationships among author and news nodes. There are three types of relationships that are considered: the authorship relation $E_{an}$ between author and news nodes, the affiliation relation $E_{aa}$ between authors, and the subject category relation $E_{nn}$ between news articles. The word graph $\calG = (\calV, \calE)$ represents each news article and author with $\calV$ denoting each word in the article content and $\calE$ denoting the proximity of one word from another. More specifically, given a  positive integer $q$, if word $\calV_1$ lies within $q$ units from $\calV_2$, then $\calE_{1,2}$ = 1. %Figure \ref{fig:word_graph} illustrates a word graph of news content and Figure \ref{fig:hetero_news_graph} shows an overview of the heterogeneous news-author graph. 
In addition to the heterogeneous model, we investigate a bipartite model consisting of only the authorship relation between news and author, which, similar to the heterogeneous model, is called the \textbf{D}ynamic \textbf{B}ipartite \textbf{G}raph \textbf{N}eural \textbf{N}etwork (\textbf{DBGNN}). This is done to observe the importance of subject and source information to our model's prediction.

% \begin{figure*}
%     \centering
%     \begin{subfigure}{0.55\textwidth}
%         \includegraphics[scale=0.42]{Figures/hetero_news_graph_og.png}
%         \caption{Heterogeneous news-author graph with subject\\ and source contextual information and relationships}
%         \label{fig:hetero_news_graph_og}
%     \end{subfigure}%
%     \begin{subfigure}{0.45\textwidth}
%         \includegraphics[scale=0.7]{Figures/word_graph.png}
%         \caption{word graph $\calG$ of news content: 'Did you see the gruesome news earlier today? It was extremely frustrating to see violence closeby.'}
%         \label{fig:word_graph}
%     \end{subfigure}
%     \caption{The proposed model: graphs $G$ and $\calG$}
% \end{figure*}

% \begin{figure*}[htp!]
%     \centering
%     \begin{subfigure}{0.5\textwidth}  
%         \centering
%         \includegraphics[scale=0.6]{Figures/hetero_news_graph.png}
%         \caption{Heterogeneous news-author graphs}
%         \label{fig:hetero_news_graph}
%     \end{subfigure}%
%     \begin{subfigure}{0.5\textwidth} 
%         \centering
%         \includegraphics[scale=0.6]{Figures/bipartite_new_graph.png}
%         \caption{Bipartite news-author graphs}
%         \label{fig:bipartite_news_graph}
%     \end{subfigure}
%     \caption{News-author graphs}
%     \label{fig:news_graph}
% \end{figure*}

Figure \ref{fig:net_arch} shows the overall network architecture and process from the data extraction of the raw news articles to obtaining prediction through our models. Firstly, each incoming news article is first converted to word graphs $\calG$ of each news content and author profile. Then, the data extraction module processes each word graph into a graph embedding vector using BERT model and personalized pagerank algorithm to take into account the word semantic and the influence of words that are close to each other in a word graph. Secondly, the news-author mapping can be obtained from the incoming news article. In addition, by considering the news subject category and author affiliation relationship among the news and author nodes, we can get the news-news and author-author mappings if two news have the same subject and two authors have the same publication source, respectively. Finally, the graph embedding vectors and the mappings are fed into either the DHGNN or DBGNN model, which predicts whether the input news is fake. Each step in the network architecture will be discussed in detail in the subsequent sections.

\begin{figure*}[t!]
    \centering
    \includegraphics[scale=0.58]{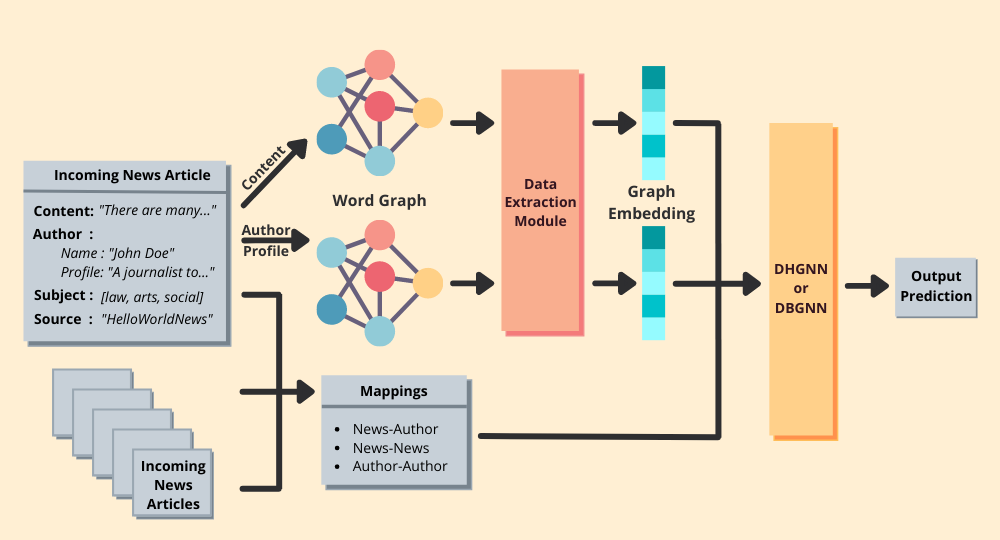}
    \caption{The overall architecture of the proposed method.}
    \label{fig:net_arch}
\end{figure*}

\subsection{Detailed Model Architecture}
\subsubsection{Data Extraction Method} \label{sec:data_extract}
As the information provided in the data sets and their structures vary, we need to preprocess them to extract some important information that is relevant to our proposed model. Figure \ref{fig:data_extract} shows the data extraction module.
There are three main steps in extracting the data: (i) selecting and cleaning the data, (ii) evaluating the word semantics using BERT \cite{BERT}, and (iii) converting the result into a graph data input for the model.

\subsubsection*{Selecting and Cleaning the data}
In general, the data sets used in our experiments are quite clean. However, as they are not designed specific for our purpose, we need to firstly select the relevant information. More specifically, the following fields are retrieved (if exist):
\begin{enumerate}
    \setlength\itemsep{0em}
    \item News: \textsc{id}, content, author(s), subject(s), label.
    \item Author: \textsc{id}, name, profile, source/affiliation, label.
    \item Source: \textsc{id}, name.
    \item Subject: \textsc{id}, name.
\end{enumerate}
\begin{figure*}[t!]
    \centering
    \includegraphics[scale=0.35]{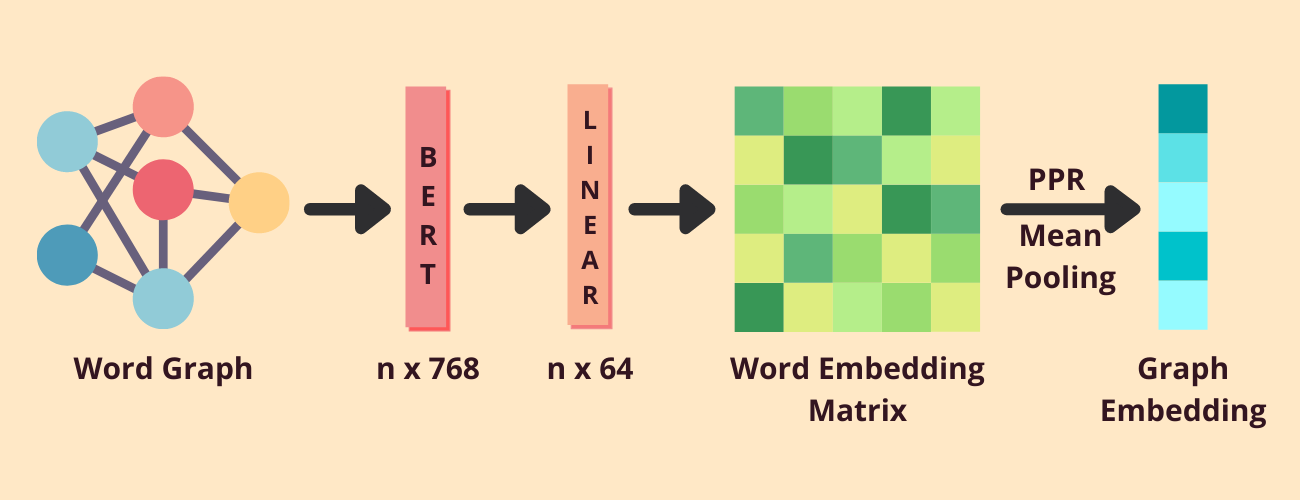}
    \caption{Data extraction module}
    \label{fig:data_extract}
\end{figure*}
The following rules and conditions are applied to each data set:
\begin{itemize}
    \setlength\itemsep{0em}
    \item The news content and author profile must be in English. Texts in other languages are removed from the data set.
    \item If the data does not come with an \textsc{id}, we will assign a unique positive integer as \textsc{id}.
    \item If there is no news content or label, the data point is filtered out.
    \item If there is/are no subject(s), the field is set to empty.
    \item If author profile is not given, the field is set to the author name.
    \item If there is no author or source, author name and profile are set to be unknown.
    \item If there is no author but there is a source, we set the author to be the source.
    \item The author label is computed as the average of the authored news labels.
    \item Non-binary news classification is converted to binary classification (real/fake).
\item Punctuation is removed from news content and author profile.
\end{itemize}

\subsubsection*{Finetuning BERT}
BERT (Bidirectional Encoder Representation from Transformers) is a language representation model which uses the idea from Transformer \cite{transformer} and achieves state-of-the-art results on various NLP tasks, including text classification. The original source code by Google Research team is made available \href{https://github.com/google-research/bert}{here}, which is adapted by \href{https://huggingface.co/}{Hugging Face} in their \href{https://huggingface.co/transformers/index.html}{Transformers library} for PyTorch, TensorFlow, and Jax. 

For our purpose, we use the pretrained base BERT model and tokenizer for uncased English texts from Hugging Face Transformers. The model consists of 12 layers, 768 hidden channels and 12-headed attention mechanism. However, the maximum input sequence length is 512, which is too small, considering some news articles contain thousands of words. Furthermore, the pretrained model is trained on Wikipedia pages, which is not specific to our problem. Hence, we want to finetune the pretrained model with our data sets.
% \begin{figure}
%     \centering
%     \includegraphics[scale=0.45]{Figures/BERT_finetuning.png}
%     \caption{BERT finetuning strategy}
%     \label{fig:bert_finetune}
% \end{figure}
% Figure \ref{fig:bert_finetune} describes the finetuning process of the pretrained BERT model. 
For the finetuning process of the pretrained BERT model, in general, we train the pretrained BERT model and pool the result through a final linear (dense) layer, which yields the final output prediction. The idea follows the suggestion given by BERT's first author on this \href{https://github.com/google-research/bert/issues/27}{Github issue}.

\subsubsection*{Getting BERT word token embedding}
The process of obtaining tokenized sequence embedding with a BERT model is outlined as follows:
\begin{enumerate}
    \setlength\itemsep{0em}
    \item Each cleaned textual description, i.e., news content and author profile, is tokenized using the base BERT model tokenizer. If the resulting tokenized sequence length is longer than 512, it is split into tokenized sequences of length 512. If the sequence is shorter than 512, it is padded with zeros. Note that we need to ensure that each split tokenized sequence starts and ends with tokens ('[CLS]', 101) and ('[SEP]', 102), respectively.
    
    \item The token-index mapping is also saved so that we can construct each word embedding for our feature matrices. This is due to how each word is tokenized. %Figure \ref{fig:token_index_mapping_ex} shows a token-index mapping of a particular sequence. 
    Note that some words such as 'gruesome' and 'closeby' are split into multiple tokens. Detailed information on various BERT tokenizers is available \href{https://huggingface.co/transformers/tokenizer_summary.html}{here}.
    % \begin{figure}
    %     \centering
    %     \includegraphics[scale=0.5]{Figures/token_index_mapping.png}
    %     \caption{token-index mapping example with the original text: 'Did you see the gruesome news earlier today? It was extremely frustrating to see violence closeby.'}
    %     \label{fig:token_index_mapping_ex}
    % \end{figure}
    \item The pretrained BERT model is loaded from Transformers and set to evaluation mode. Alternatively, we can finetune the pretrained BERT model as described in the previous section.
    \item There are different methods in computing the final tokenized sequence embedding using BERT. Comparisons of some of these methods are reported in \cite[Table 7]{BERT}. In our experiment, the embedding vector of eachtokenized sequence is computed by adding the output of the last four hidden layers of the pretrained BERT model.
\end{enumerate}
At the end of the last step, we have an embedding matrix along with a token-index mapping for each tokenized sequence, which can be converted into a feature matrix for each sequence. Note that we apply this process to both news content and author profile.

There are of course other methods we can use to get text embedding such as Word2Vec \cite{Word2Vec}, GloVe \cite{GloVe} and TF-IDF \cite{TFIDF}. However, they are context independent, which means that each polysemous word is represented by the same vector. For example, the word 'fly', regardless whether it is a noun or verb will be represented by the same vector. As context is important in the news classification task, we choose BERT to generate each word embedding in a text.

\subsubsection*{Conversion to graph data}
To use our data in a GNN model, we need to convert it into graph input data, as described in Section \ref{sec:problem}. This includes an adjacency matrix and a feature matrix for each word graph $\calG$, mappings among authors and news articles to represent the edges in graph $G$, and a list of labels corresponding to each word graph.\\
The construction of the feature matrix can be done in the following steps:
\begin{enumerate}
    \setlength\itemsep{0em}
    \item Get the tokenized sequence embedding matrix and token-index mapping of each sequence.
    \item If each word in the sequence is represented by one token, we take the token embedding vector as the word embedding vector. However, if each word is represented by multiple tokens, we take the average of the token embedding vectors as the word embedding vector.
    \item The feature matrix of each sequence is obtained by concatenating the word embedding vectors.
\end{enumerate}
The construction of the adjacency matrix can be done in the following steps:
\begin{enumerate}
    \setlength\itemsep{0em}
    \item As outlined in Section \ref{sec:problem}, the edge $\epsilon$ denotes the proximity of words in the graph, represented by the window size $q$. The value used in our experiments is $q = 3$.
    \item Since each word in a sequence is sequentially process for the feature matrix, the adjacency matrix is a sparse matrix with up to four diagonals with all ones. If the sequence is empty, an empty matrix is returned.
\end{enumerate}
There are three different mappings on the heterogeneous graph $G$: 
\begin{itemize}
    \setlength\itemsep{0em}
    \item author-author: an edge exists between two authors if their news articles are published on the same media channel.
    \item news-news: an edge exists between two news articles if they are on at least one same topic.
    \item author-news: an edge exists between author and news article if the news is written by the author.
\end{itemize}
The feature and adjacency matrices, along with the labels and mappings among news and authors, are then saved into a file, to be used in the model training and evaluation.
Note that in this experiment, we have multiple instances of the same word in the word graph. This is due to the fact that many English words are polysemous and using BERT, we are able to differentiate the word semantics. Hence, rather than pooling the result, we preserve each word embedding for our classification task.

\subsubsection{Model Implementation}

\begin{figure*}
    \centering
    \includegraphics[scale=0.6]{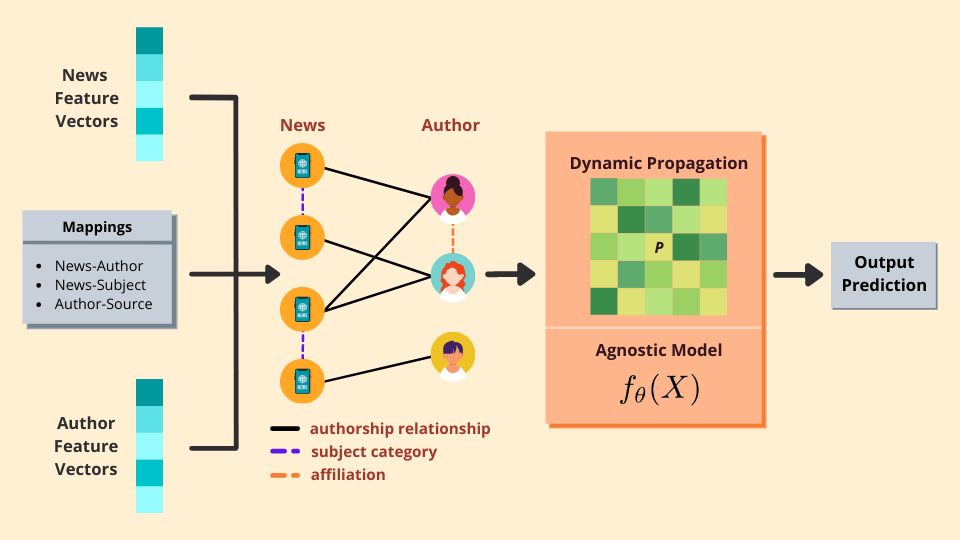}
    \caption{Overview of DHGNN and DBGNN}
    \label{fig:DHGNN_DBGNN}
\end{figure*}

Figure \ref{fig:DHGNN_DBGNN} shows the architecture of DBGNN and DHGNN. The model consists of a classifier and a propagation method, which are implemented separately. This is due to the application of dynamic propagation to replace the usual message passing in GNN. The outline of the network is shown in the following sections.

\subsubsection*{Getting the news-author graph data}
To construct the news-author graph, we firstly combine the word embedding vectors in each sequence. The idea is to compute the influence of each neighbouring word in the sequence using a personalized pagerank algorithm. Then, the aggregate embedding vector of each text is just the weighted average of the neighbouring words' embedding vectors. This weighted average serves as the feature vector of the sequence and it is applied to every news content and author profile. The next step is to construct the news-author graph. The feature matrix is obtained by appending the feature vectors we obtained in the previous step. The adjacency matrix is constructed using the news-author mapping that we get from the data extraction phase whereas the news and author labels can be used as they are.

% \subsubsection*{Employing transductive and inductive learning}
% \begin{figure}
%     \centering
%     \includegraphics[scale=0.15]{Figures/tranductive-inductive.png}
%     \caption{Tranductive vs Inductive learning on GNN}
%     \label{fig:transductive-inductive}
% \end{figure}
% Figure \ref{fig:transductive-inductive} shows the difference between transductive and inductive learning in the context of GNN. Tranductive learning is a type of learning where during the training phase, a model is fed all data regardless whether they are training, validation or testing data. However, the loss criterion is computed from the training data only. Similarly for validation and testing. Hence, the model is able to learn patterns from the whole data. On the other hand, inductive learning is a type of learning where only the designated partition of the data (training, validation or testing) is fed into the model at a time. Thus, in this case, the model only learns the training part of the data during training.

% Since our task falls under node classification problem, our inductive learning is not applied on a set of graphs, but instead a set of nodes. This is an unusual use of inductive learning on GNN, in which the idea is to partition the nodes into different graphs for training, validation and testing. The purpose of employing both transductive and inductive learning in our model is to compare the model's results from using different types of learning. Note that there are some ongoing research into combining transductive and inductive learning in a model, e.g., \cite{MixTransInduct}. 

\subsubsection*{Agnostic Model/Classifier $f_\theta(X)$}
The classifier used is a vanilla Graph Convolutional Network (GCN) without the usual message-passing propagation, which was initially implemented in \cite{PPNP}. The timeliness of the model is achieved by replacing the usual message passing propagation mechanism with dynamic propagation, which will be described in detail in Section \ref{sec:prop}. 

\subsubsection*{$K$-Fold Cross Validation}
The $K$-Fold Cross Validation is a common method to robustly test the prediction quality of a model. The data set is divided into $K$ even (or almost even) partitions, and then in each iteration, we run the model with $K-1$ partitions for training and 1 partition for validation. At the end, we take an average of the validation results as the final result. The cross validation scheme can be combined with a final testing phase by only dividing the training and validation data into $K$ partitions, while leaving some data points for testing after the cross validation is finished.

\subsubsection{Dynamic Propagation} \label{sec:prop}
In \cite{SDG}, the authors utilize a dynamic propagation scheme to replace the usual message passing propagation. This not only reduces the training time, but also allows for larger neighborhoods to be considered in their model.
Following the idea in the aforementioned paper, the DHGNN model utilizes the 2-hop dynamic propagation whereas the DBGNN model utilizes the mixed dynamic propagation, which are both a variation of the original dynamic propagation method in \cite{SDG}, applied on a heterogeneous graph.

\subsubsection*{2-Hop Dynamic Propagation}
To provide a timely prediction, we aim to apply dynamic propagation scheme, proposed in \cite{SDG}, on graph $G = (V,E)$. This scheme allows the model to recompute the propagation matrix when there is new information without having to retrain the model.
% \begin{figure*}
%     \centering
%     \includegraphics[scale=0.2]{Figures/SDG_Overview.png}
%     \caption{Overview of the dynamic propagation scheme in SDG \cite{SDG}}
%     \label{fig:sdg}
% \end{figure*}
In more detail, each row of the dynamic propagation matrix $P$, or $P(i,:)$, is a stationary distribution of a random walk with restart (RWR) from node $V_i$, which can be expressed as follows. Let $V_i$ be the seed node. Then,
\begin{align} \label{eq:P_def}
   P(i,:)^\top \,=\, \alpha \, M P(i,:)^\top + (1-\alpha) \,\bse_i,
\end{align}
where $\alpha \in [0,1)$ is the probability of moving through an outgoing edge from the current node, and $1-\alpha$ the restart probability, i.e., the probability of jumping back to the seed node. In the equation, $M = A D^{-1}$ is a $n \times n$ column stochastic matrix, which is obtained by multiplying the $n \times n$ adjacency matrix $A$ with the inverse of the $n \times n$ diagonal matrix $D$ whose entries $d_{i}$ are the degrees of node $V_i$ for all $i$. The vector $\bse_i$ denotes the standard basis vector where all its components are zero, except the $i$-th element, which is one.

When there is a change in the graph topology, i.e., insert/delete edge and/or new nodes, each row of the dynamic propagation matrix $P$ needs to be updated. The exact computation of RWR, and consequently $P$, is expensive, however, there are fast and scalable methods which provide good approximation. One such methods uses cumulative power iteration (CPI) \cite{TPA} by pushing out the previous probability distribution score from the changed part to the residual part of the graph, as outlined in \cite{SDG, OSP}. Mathematically, the process can be written as
\begin{align*}
    P(i,:)^\top_{\mathrm{pushout}} &\,=\, \alpha (M' - M) P(i,:)^\top, \hspace{1em} \text{and}\\
    P(i,:)^\top_{\mathrm{new}} &\,=\,  P(i,:)^\top + \sum_{k=0}^\infty \, (\alpha M')^k P(i,:)^\top_{\mathrm{pushout}},
\end{align*}
where $M' = A' D'^{-1}$, $A'$ is the new adjacency matrix and $D'$ is the new diagonal degree matrix after the graph is updated. Note that the convergence and exactness of the above process have been studied in \cite[Section 3.1]{OSP}, which shows that the method does converge and result in the new probability distribution score as $k \rightarrow \infty$.

However, due to the heterogeneous nature of the problem, the unipartite dynamic propagation in the aforementioned paper cannot be applied directly to our bipartite news-author network, and consequently the heterogeneous network. To resolve this issue on the bipartite network, instead of using 1-hop random walk, we apply 2-hop random walk. Thus, $M_2 = M^2 = (AD^{-1})^2$. Since $M$ is a stochastic matrix and the fact that a product of stochastic matrices is also a stochastic matrix, $M_2$ is also a stochastic matrix.

To prove the convergence and exactness of the 2-hop dynamic propagation, we will follow the proof in \cite{OSP} and replace $M$ with $M_2$ as shown in the following theorem.

\begin{theorem}[Convergence and Exactness of 2-Hop Dynamic Propagation Scheme] \label{thm:two_hop}
\hspace{0.5em} Let $G = (V,E)$ be the bipartite news-author graph. Suppose that $G$ is updated to $G' = (V',E')$ such that the 2-hop dynamic propagation scheme is given by
\begin{align*}
    P(i,:)^\top_{\mathrm{pushout}} &\,=\, \alpha (M_2' - M_2) P(i,:)^\top, \hspace{1em} \text{and}\\
    P(i,:)^\top_{\mathrm{new}} &\,=\,  P(i,:)^\top + \sum_{k=0}^\infty \, (\alpha M_2')^k P(i,:)^\top_{\mathrm{pushout}},
\end{align*}
where $P(i,:)$ is the $i$-th row of the dynamic propagation matrix $P$, $\alpha \in [0,1)$ is the probability of jumping to the outgoing edge of the current node and $M_2 = (AD^{-1})^2$.
Then, $\sum_{k=0}^\infty \, (\alpha M_2')^k P(i,:)^\top_{\mathrm{pushout}}$ converges to a constant value as $k \rightarrow \infty$ and $P_{\mathrm{new}}$ is the new propagation matrix of the updated graph $G'$ whose rows are denoted as $P(i,:)_{\mathrm{new}}$.
\end{theorem}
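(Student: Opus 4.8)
The plan is to reduce the statement to the one-hop convergence/exactness result of \cite{OSP} by observing that its proof uses the propagation matrix only through two properties: that the matrix is column stochastic (hence has spectral radius $1$), and that the restart factor satisfies $\alpha \in [0,1)$. Since $M = AD^{-1}$ is column stochastic and a product of column stochastic matrices is again column stochastic, $M_2 = M^2$ and likewise $M_2' = (M')^2$ are column stochastic. Thus the substitution $M \mapsto M_2$, $M' \mapsto M_2'$ is legitimate, and the whole argument of \cite{OSP} carries over once these two hypotheses are verified for $M_2'$.

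First I would establish convergence. Because $M_2'$ is column stochastic its spectral radius equals $1$, so the spectral radius of $\alpha M_2'$ is $\alpha < 1$. Hence the Neumann series converges and
\begin{equation*}
    \sum_{k=0}^\infty (\alpha M_2')^k \,=\, (I - \alpha M_2')^{-1},
\end{equation*}
a finite constant matrix, which shows that $\sum_{k=0}^\infty (\alpha M_2')^k P(i,:)^\top_{\mathrm{pushout}}$ converges to the constant vector $(I-\alpha M_2')^{-1} P(i,:)^\top_{\mathrm{pushout}}$ as $k \rightarrow \infty$, as claimed.

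Next I would prove exactness, i.e., that $P(i,:)^\top_{\mathrm{new}}$ is the RWR stationary vector on $G'$, which by \eqref{eq:P_def} (with $M$ replaced by $M_2'$) amounts to verifying $(I - \alpha M_2') P(i,:)^\top_{\mathrm{new}} = (1-\alpha)\bse_i$. Writing $p = P(i,:)^\top$ and using the closed form $P(i,:)^\top_{\mathrm{new}} = p + (I-\alpha M_2')^{-1} P(i,:)^\top_{\mathrm{pushout}}$ from the previous step, I would apply $(I - \alpha M_2')$ to both sides; the inverse cancels, leaving $(I-\alpha M_2')p + P(i,:)^\top_{\mathrm{pushout}}$. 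Substituting the definition $P(i,:)^\top_{\mathrm{pushout}} = \alpha(M_2'-M_2)p$ causes the $\alpha M_2' p$ terms to telescope, collapsing the expression to $(I - \alpha M_2)p$. The old stationarity relation $(I-\alpha M_2)p = (1-\alpha)\bse_i$ then closes the argument.

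The main obstacle is less the algebra than the bookkeeping that makes the algebra well posed. When $G'$ adds new nodes, $M_2$ and $M_2'$ act on spaces of different dimension, so the difference $M_2'-M_2$ and the old stationarity identity are only meaningful after embedding the old quantities into the enlarged node space by zero-padding, as in \cite{SDG, OSP}; I would state this embedding explicitly at the outset. The second delicate point is confirming that $M_2'$ remains exactly column stochastic after the update, equivalently that every node of $G'$ retains positive degree so that $D'^{-1}$ is well defined, since the spectral-radius bound, and hence both the convergence and the existence of $(I-\alpha M_2')^{-1}$, depends on it.
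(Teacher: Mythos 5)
Your proposal is correct, and it reaches the conclusion by a genuinely different (and tighter) route than the paper. The paper never forms the resolvent $(I-\alpha M_2')^{-1}$: for convergence it expands the pushout term into a difference of powers and appeals to $\alpha^{k+1}\to 0$ together with $\|M_2\|_1=\|M_2'\|_1=1$; for exactness it substitutes the cumulative-power-iteration representation $P(i,:)^\top=\sum_{k'}(\alpha M_2)^{k'}P(i,:)^\top_{\mathrm{old}}$ into the double sum and telescopes both the inner and outer series until everything collapses to $\sum_{k}(\alpha M_2')^{k}P(i,:)^\top_{\mathrm{old}}$, i.e., the CPI form of the new stationary vector. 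You instead sum the Neumann series in closed form and verify the defining fixed-point equation $(I-\alpha M_2')P(i,:)^\top_{\mathrm{new}}=(1-\alpha)\bse_i$ directly, using the old stationarity relation $(I-\alpha M_2)p=(1-\alpha)\bse_i$; the two premises are equivalent (the CPI series is exactly the resolvent applied to $(1-\alpha)\bse_i$), so the arguments prove the same thing, but yours avoids the interchange-of-summation bookkeeping entirely and replaces the paper's somewhat loosely worded convergence step ("$\alpha\rightarrow 0$ as $k\rightarrow\infty$") with a clean spectral-radius bound. You also flag two issues the paper passes over silently, namely the zero-padding needed to compare $M_2$ and $M_2'$ when $G'$ gains nodes, and the requirement that every node of $G'$ have positive degree so that $D'^{-1}$, and hence the column stochasticity of $M_2'$, survives the update; both are genuine hypotheses the theorem implicitly needs, and stating them is an improvement rather than a deviation.
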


\begin{proof}
We will begin by proving the convergence of the summation term. By expanding the pushout term, we have
\begin{align} \label{eq:conv_proof}
    &\sum_{k=0}^\infty \, (\alpha M_2')^k P(i,:)^\top_{\mathrm{pushout}} 
    \,=\, \sum_{k=0}^\infty \, (\alpha M_2')^k \alpha (M_2' - M_2) P(i,:)^\top \\
    &\,=\,
    \sum_{k=0}^\infty \, (\alpha M_2')^{k+1} P(i,:)^\top - \big(\alpha^{k+1} (M_2')^k M_2 \big) \, P(i,:)^\top. \nonumber
\end{align}
Since $M_2$ and $M_2'$ are stochastic matrices, we have $\|M_2\|_1 = 1$ and $\|M_2'\|_1 = 1$. Moreover, $\|P(i,:)\|_1 = 1$ as $P(i,:)$ is a stationary distribution of $V_i$. Thus, the summation terms converge since $\alpha \rightarrow 0$ as $k \rightarrow \infty$.

We will exhaustively prove that $P(i,:)_{\mathrm{new}}$ is indeed the $i$-th row of the new propagation matrix $P_{\mathrm{new}}$. Firstly, we check the case when there is no update in the graph, which implies that $M_2 = M_2'$ and subsequently, $P(i,:)^\top_{\mathrm{pushout}} = \bszero$. It is clear that in this case, $P(i,:)_{\mathrm{new}}^\top = P(i,:)^\top$. When there is an update in the graph, we begin by expanding the pushout summation term as shown in \eqref{eq:conv_proof}. Then, rewriting vector $P(i,:)^\top$ as $\sum_{k' = 0}^\infty (\alpha M_2)^{k'} P(i,:)^\top_{\mathrm{old}}$, we have
\begin{align*}
    &\sum_{k=0}^\infty \, (\alpha M_2')^k P(i,:)^\top_{\mathrm{pushout}} \\
    & \,=\, \sum_{k=0}^\infty \bigg( (\alpha M_2')^k \alpha (M_2' - M_2) \sum_{k' = 0}^\infty (\alpha M_2)^{k'} P(i,:)^\top_{\mathrm{old}} \bigg) \\
    &\,=\,
    \sum_{k=0}^\infty \bigg( (\alpha M_2')^{k+1} \sum_{k'=0}^\infty (\alpha M_2)^{k'} P(i,:)^\top_{\mathrm{old}} \\
    & \,\,\,\,\,\,\, - (\alpha M_2')^{k} \sum_{k'=0}^\infty (\alpha M_2)^{k' + 1} P(i,:)^\top_{\mathrm{old}} \bigg).
\end{align*}
Since the last summation term in the sum over $k$ can be rewritten as
\begin{align*}
    \sum_{k'=0}^\infty (\alpha M_2)^{k' + 1} P(i,:)^\top_{\mathrm{old}}
    &\,=\,
    -P(i,:)^\top_{\mathrm{old}} +\sum_{k'=0}^\infty (\alpha M_2)^{k'} P(i,:)^\top_{\mathrm{old}},
\end{align*}
and the first term (including the outer sum) can be rewritten as
\begin{align*}
    & \sum_{k=0}^\infty (\alpha M_2')^{k+1} \sum_{k'=0}^\infty (\alpha M_2)^{k'} P(i,:)^\top_{\mathrm{old}} \\
    &\,=\,
    -\sum_{k'=0}^\infty (\alpha M_2)^{k'} P(i,:)^\top_{\mathrm{old}} + \sum_{k=0}^\infty (\alpha M_2')^{k} \sum_{k'=0}^\infty (\alpha M_2)^{k'} P(i,:)^\top_{\mathrm{old}}, 
\end{align*}
we have
\begin{align*}
    & \sum_{k=0}^\infty \, (\alpha M_2')^k P(i,:)^\top_{\mathrm{pushout}} \\
    &\,=\, 
    -\sum_{k'=0}^\infty (\alpha M_2)^{k'} P(i,:)^\top_{\mathrm{old}} + \sum_{k=0}^\infty (\alpha M_2')^{k} P(i,:)^\top_{\mathrm{old}} \\
    &\,=\,
    -P(i,:)^\top + \sum_{k=0}^\infty (\alpha M_2')^{k} P(i,:)^\top_{\mathrm{old}}. 
\end{align*}
Hence, substituting the above equation to the formula for $P(i,:)^\top_{\mathrm{new}}$ yields
\begin{align*}
    P(i,:)^\top_{\mathrm{new}} 
    &\,=\, 
    P(i,:)^\top + \sum_{k=0}^\infty \, (\alpha M_2')^k P(i,:)^\top_{\mathrm{pushout}} \\
    &\,=\,
    \sum_{k=0}^\infty (\alpha M_2')^{k} P(i,:)^\top_{\mathrm{old}},
\end{align*}
which is exactly the stationary distribution of $P(i,:)$ using CPI.
\end{proof}

\subsubsection*{Mixed Dynamic Propagation}
In DHGNN, we take into account three different mappings among news and author entities: news-author, news-news and author-author mappings. Note that the news-author mapping can be independently represented by the 2-hop dynamic propagation derived in the previous section, whereas the homogeneous news-news and author-author mappings can be represented by the original dynamic propagation in \cite{SDG}. Thus, the mixed dynamic propagation is a weighted average of the two-hop dynamic propagation and the original dynamic propagation. To be more precise, given the two-hop propagation matrix $P_{an}$ and the dynamic propagation matrices $P_{nn}$ and $P_{aa}$, the mixed propagation matrix $P$ is defined as 
\begin{align*}
    P &\,=\, \beta_0 P_{an} + \beta_1 P_{nn} + \beta_2 P_{aa},
\end{align*}
where $\beta_0 + \beta_1 + \beta_2 = 1$, and $P_{an}$, $P_{nn}$ and $P_{aa}$ represent the propagation matrices of author-news, news-news and author-author relationships, respectively.

Similar to the 2-hop dynamic propagation scheme, the convergence and exactness of the mixed dynamic propagation scheme can be proved by observing that the weighted sum of stochastic matrices is also a stochastic matrix. Hence, the mixed dynamic propagation matrix is a stochastic matrix. Then, the proof of Theorem~\ref{thm:two_hop} can be applied to the mixed dynamic propagation scheme by replacing the two-hop propagation matrix with the mixed propagation matrix.

%------------------------------------------------------------------------------------------
\section{Experiments} \label{sec:experiments}
In this section, we will analyze the performance of our model in comparison to existing state-of-the-art agnostic and task-dependent homogeneous and heterogeneous models. 
To compare our models and others, we conducted a grid search for the most optimal hyperparameters for each model listed in Section \ref{sec:models} with 4-fold cross validation.
The metrics observed include the accuracy, precision, recall, AUC and F1 scores of the model on the validation set, and the training time with early stopping. The training is only stopped when the training loss does not improve within 10 consecutive epochs. Note that the quoted training time includes the propagation scheme of the model.

All experiments are conducted using a shared computational cluster called \href{https://doi.org/10.26190/669x-a286}{Katana} supported by Research Technology Services at UNSW Sydney with up to six Tesla V100-SXM2-32GB GPUs and eight CPUs run on CentOS Linux 7. 
\subsection{Data Sets}
For our research on the timely fake news detection with bipartite and heterogeneous news-author networks, there are two suitable data sets that will be used: LIAR \eqref{dataset:liar} and FakeNewsNet \eqref{dataset:fakenewsnet}.

\subsubsection{LIAR} \label{dataset:liar}
\emph{LIAR} contains 12,800 manually labelled short statements from various subjects, sourced from PolitiFact over a 10-year-period from 2007 to 2016\cite{wang2017liar}. It groups the data into training, testing and validation, with each row containing the following information:
\begin{center}
    \{id, statement, label, subject(s), speaker name, speaker profile (job title, US base state,\\ party affiliation), context, statement setting, speaker's total historical score\}.
\end{center}
%%If more information, such as the detailed report on the fact-check rating and the original source of the news, are needed, they can be accessed from the PolitiFact website by searching for the news id. 
LIAR is publicly available for fake news detection and can access via this \href{https://www.cs.ucsb.edu/~william/data/liar_dataset.zip}{link}.\\
%%\textbf{Usage:} Fake News Detection\\
%\textbf{Issue:}
%\begin{enumerate}
%    \item Data does not have a timestamp for publication and collection.
%    \item Many data points lack author profiles.
%    \item Total historical score include future scores.
%\end{enumerate}

\subsubsection{FakeNewsNet} \label{dataset:fakenewsnet}
\emph{FakeNewsNet} is a tool which enables its users to collect, analyse and visualize fake news detection and dissemination on social media, namely Twitter. Combining the news from PolitiFact and GossipCop, along with the corresponding tweets and retweets related to the news, the data collected are labelled by experts and journalists. The data is separated based on the source, i.e., Politifact or GossipCop, and whether the label is fake or real. 

The repository is publicly available for fake news detection, fake news propagation and can be accessed \href{https://github.com/KaiDMML/FakeNewsNet}{here}. \\

% \begin{table*}[htp!]
%     \centering
%     \begin{tabular}{ m{5em} m{7em} m{4em} | m{8em} m{4em}}
%         \hline
%          & \multicolumn{2}{c}{LIAR} & \multicolumn{2}{c}{FakeNewsNet} \\
%         \hline
%         \multirow{4}{*}{\# entity} & news    & 12,836 & \hspace{1em}news    & 18,403 \\
%                                   & author  & 3,318  & \hspace{1em}author  & 4,301  \\
%                                   & subject & 144    & \hspace{1em}subject & 0     \\
%                                   & source  & 5,103  & \hspace{1em}source  & 1,547  \\
%         \hline
%         \multirow{4}{*}{\# link} & news-author   & 12,836 & \hspace{1em}news-author   & 18,403 \\
%                                  & news-news     & 66,288 & \hspace{1em}news-news     & N/A    \\
%                                  & author-author & 18,228 & \hspace{1em}author-author & 86,858 \\
%         \hline
%     \end{tabular}
%     \caption{Overview of the LIAR and FakeNewsNet data sets}
%     \label{tab:dataset}
% \end{table*}
\begin{table}[t]
  \caption{Statistics of the LIAR and FakeNewsNet Datasets.}
  \small\centering
  \begin{tabular}{lc|c|c|c}
    \toprule[.2em]
    & {} & \multicolumn{1}{c|}{\bfseries Liar} & \multicolumn{1}{c|}{\bfseries Gossipcop} &\multicolumn{1}{c}{\bfseries Politifact} \\
    \midrule[.1em]
    \multirow{5}{0.77cm}{\#entity} 
      & {news} & 12,836 & 17,543 & 656  \\
    \cmidrule(lr){2-5}
      & {author} & 3,318 & 3,848 & 315 \\
    \cmidrule(lr){2-5}
      & {subject} & 144 & 0 & 0 \\
      \cmidrule(lr){2-5}
      & {source} & 5,103 & 1,544 & 106  \\
    \midrule[0.1em]
    
    \multirow{4}{0.8cm}{\#link} 
      & {news-author} & 12,836 & 17,542 & 656 \\
    \cmidrule(lr){2-5}
      & {news-news} & 66,288 & N/A & N/A \\
    \cmidrule(lr){2-5}
      & {author-author} & 18,228 & 86,721 & 2,819 \\

    \bottomrule[.2em]
    \end{tabular}
    \label{tab:dataset}
\end{table}

\subsection{Models} \label{sec:models}
\begin{enumerate}
    \item DBGNN
    
    Our dynamic bipartite model without BERT finetuning in the data extraction module.
    
    \item DBGNN$_{finetuning}$
    
    Our dynamic bipartite model with BERT finetuning in the data extraction module.
    
    \item DHGNN
    
    Our dynamic heterogeneous model without BERT finetuning in the data extraction module.
    
    \item DHGNN$_{finetuning}$
    
    Our dynamic heterogeneous model with BERT finetuning in the data extraction module.
    
    \item GCN
    
    2-layer Vanilla GCN from \cite{GCN} for a homogeneous graph-level news classification problem with Word2Vec to get embedding for each text.
    
    \item GAT
    
    2-layer GAT from \cite{GAT} for a homogeneous graph-level news classification problem with Word2Vec to get embedding for each text.
    
    \item HAN
    
    HAN model from \cite{wang2019heterogeneous} for heterogeneous graph-level news classification problem with author and subject information and TfIdfVectorizer for data extraction.
    
    \item HGAT
    
    HGAT model from \cite{HGAT} for a heterogeneous graph-level news classification problem with author and subject information and TfIdfVectorizer for data extraction.
    
    % \item FANG
    
    % FANG model from \cite{FANG} for a heterogeneous graph-level news classification problem with news-author and news-source information and TfIdfVectorizer for data extraction.
    
\end{enumerate}

\subsection{Results}

Table~\ref{tab:result_liar}, ~\ref{tab:result_Gossipcop}, and~\ref{tab:result_Politifact} summary the training time and prediction quality of different models in the experiments on each dataset.

\subsubsection{Effectiveness of BERT Finetuning}
% compare our models with and without BERT finetuning
In this section, we will compare the quality of the model which uses pretrained BERT model with the model which uses finetuned BERT model in the data extraction phase. Note that the details about pretrained and finetuned BERT models are discussed in Section \ref{sec:data_extract}.

The 5th and 6th columns and the 7th and 8th columns of each table shows that in general, the finetuning of BERT model results in lower training time. The lower training time is caused by an earlier convergence in the models with BERT finetuning as the resulting feature matrices from the data extraction method were obtained from finetuning BERT.However, in terms of prediction quality, based on the five metrics, i.e., accuracy, precision, recall, AUC and F1 score, the finetuning of BERT model shows similar or lower performance. Due to the millions of parameters in the BERT base model and a significantly smaller benchmarking data sets, BERT finetuning results in overfitting, which leads to lower prediction quality in comparison to the models using a pretrained BERT model.

\subsubsection{Incorporation of Contextual Information in the Models}
% compare bipartite and hetero models
In this section, we will discuss the influence of contextual information such as news subject and source to the quality of the models, which can be inferred by comparing the same variant of DBGNN and DHGNN models.

Table~\ref{tab:result_liar}, ~\ref{tab:result_Gossipcop}, and~\ref{tab:result_Politifact} shows that the training in the DHGNN model converges slower than DHGNN. This is due to the heavier sparse matrix computations in the dynamic propagation matrix calculation. This is expected as the number of matrix multiplications is tripled in each step.
On the other hand, we also observe that the predictions made by DHGNN and DBGNN are different depending on the dataset. In LIAR and Gossipcop dataset, DBGNN makes predictions with slightly higher accuracy than DHGNN. However, in case of Politifact dataset, the result is different from the previous result. A possible explanation of this occurrence is the insufficient cleaning and filtering of the contextual information in the data sets. For example, the source column in the LIAR data set is often descriptive and not categorical, and there is not many news in the Gossipcop data set which contains subject information. Therefore, the use of contextual information in our models is not consistent. Further studies are needed to confirm the effect of adding contextual information into the models.

\subsubsection{Comparison on the Training Time of Different Models}
It is clear that our bipartite models have the shortest training time for LIAR, Gossipcop and Politifact data sets. Our heterogeneous model is the second fastest model. Compared to the bipartite model, DHGNN took 23\%, 27\% and 39\% more training time for LIAR, Gossipcop and Politifact dataset, respectively. HGAT trained for more than twice the training duration of the heterogeneous models, followed by HAN models with around 234 seconds for LIAR, 783 seconds for Gossipcop, and 152 seconds for Politifact data sets. Finally, the homogeneous models, namely GCN and GAT, were significantly slower with around 300 seconds, over 1200 seconds, and over 200 seconds training time for LIAR, Gossipcop and Politifact dataset, respectively.

\subsubsection{Comparison on the Prediction Quality of Different Models}
For LIAR data set in Table\ref{tab:result_liar}, most of the models in the experiment produced similar results, with around 72.3\% accuracy and precision, and nearly 100\% recall. This is due to the small size of the news content and data set. HGAT and HAN, however, performed slightly worse than the other models with 70.17\% and 71.94\% accuracy. 
On the other hand, HGAT performed best when predicting the Gossipcop data set, with 82.64\% accuracy, 82.18\% precision and 97.42\% recall. This result is considerably higher than other models in the experiment, including our bipartite and heterogeneous models. Our models without BERT finetuning perform second best with approximately 80\% accuracy, 79\% precision, and almost 100\% recall. GCN and GAT produce similar results with around 76\% accuracy and precision, and 100\% recall. HAN, a heterogeneous model, showed better accuracy than GAT and GCN with an accuracy of 76.58\%.

As DBGNN and DHGNN use GCN as classifier in this experiment, we can see that by using the dynamic propagation scheme, we are able to reduce the training time of a vanilla GCN significantly. In fact, for LIAR data set, the training time of DBGNN and DHGNN are $\frac{1}{20}$ and $\frac{3}{50}$ of GCN training time, respectively. Similar observations can be made when trained with the FakeNewsNet data set.
Moreover, when compared with vanilla GCN, we are able to retain the prediction quality, as evident by our results in Table~\ref{tab:result_liar}, \ref{tab:result_Gossipcop} and \ref{tab:result_Politifact}.

We are aware that when trained and tested with the Gossipcop and Politifact data set, HGAT performs best in regards to both training time and prediction quality. We suspected that the results of DBGNN and DHGNN were worse than our expectation due to some overfitting as we use GCN as our classifier, although further investigation is needed to ascertain the actual cause.

\begin{table*}[t]
  \caption{Training Time and Prediction Quality of Different models on \textit{LIAR} Dataset.}
  \normalsize\centering
  \begin{tabular}{c|cccccccc}

    \toprule[.2em]
    % {} & \multicolumn{6}{c}{\bfseries GNNs} \\
    % \midrule[.1em]
    {\bfseries Metric }& {GAT} & {GCN} & {HAN} & {HGAT} &{DBGNN} &{DBGNN$_{f}$} &{DHGNN} &{DHGNN$_{f}$}  \\
    \midrule[.1em]

      {Accuracy} 
      % 0.7226 & 0.7228 & 0.7194 & 0.7017 & 0.7232 & 0.7230 & 0.7227 & 0.7228\\
      & 0.7226 & 0.7228 & 0.7194 & 0.7017 & 0.7227 & 0.7228 & 0.7226 & 0.7227 \\
      
    \cmidrule(lr){2-9}
      {Precision}
      % & 0.7226 & 0.7228 & 0.7192 & 0.7120 & 0.7234 & 0.7230 & 0.7227 & 0.7232 \\
      & 0.7226 & 0.7228 & 0.7192 & 0.7120 & 0.7227 & 0.7232 & 0.7234 & 0.7230 \\
    \cmidrule(lr){2-9}
      {Recall} 
      % & 1.0000 & 0.9998 & 0.9997 & 0.9847 & 0.9969 & 0.9978 & 0.9984 & 0.9972\\
      & 1.0000 & 0.9998 & 0.9997 & 0.9847 & 0.9984 & 0.9972 & 0.9969 & 0.9978 \\
      \cmidrule(lr){2-9}
      {F1-score}
      % & 0.8390 & 0.8390 & 0.8198 & 0.8147 & 0.8383 & 0.8393 & 0.8386 & 0.8390\\
      & 0.8390 & 0.8390 & 0.8198 & 0.8147 & 0.8386 & 0.8390 & 0.8383 & 0.8393 \\  
      \cmidrule(lr){2-9}
      {Time(s)} 
      % & 318.75\pm24.56 & 282.09\pm52.87  & 232.77\pm54.13  & 41.28\pm10.28  & 15.92 \pm4.60 & 15.48\pm4.32  & 12.93\pm5.17  & 12.28\pm4.56 \\
      & $318.75\pm24.56$ & $282.09\pm52.87$  & $183.77\pm54.13$  & $41.28\pm10.28$  & $12.93\pm5.17$  & $12.28\pm4.56$ & $15.92 \pm4.60$ & $15.48\pm4.32$ \\  
    % \midrule[.1em]
    
    \bottomrule[.2em]
  \end{tabular}
  \label{tab:result_liar}
\end{table*}

\begin{table*}[t]
  \caption{Training Time and Prediction Quality of Different models on \textit{Gossipcop} Dataset.}
  \normalsize\centering
  \begin{tabular}{c|cccccccc}

    \toprule[.2em]
    % {} & \multicolumn{6}{c}{\bfseries GNNs} \\
    % \midrule[.1em]
    {\bfseries Metric }& {GAT} & {GCN} & {HAN} & {HGAT} &{DBGNN} &{DBGNN$_{f}$}&{DHGNN}&{DHGNN$_{f}$} \\
    \midrule[.1em]

      {Accuracy} 
      & 0.7608 & 0.7549 & 0.7658 & 0.8264 & 0.7985 & 0.7875 & 0.7978 & 0.7874\\
    \cmidrule(lr){2-9}
      {Precision}
      & 0.7608 & 0.7549 & 0.7662 & 0.8218 & 0.7945 & 0.7876 & 0.7990 & 0.7880\\
    \cmidrule(lr){2-9}
      {Recall} 
      & 1.0000 & 1.0000 & 0.9889 & 0.9742 & 0.9962 & 0.9964 & 0.9967 & 0.9986\\
      \cmidrule(lr){2-9}
      {F1-score}
      & 0.8550 & 0.8550 & 0.8613 & 0.8932 & 0.8702& 0.8705 & 0.8725 & 0.8712 \\
      \cmidrule(lr){2-9}
      {Time(s)} 
      & $1223.19\pm100.05$ & $1098.37\pm158.08$ & $783.67\pm4.56$ & $68.39\pm24.51$ & $21.67\pm5.90$ & $19.54\pm4.56$ & $27.58 \pm5.16$ & $23.71\pm4.20$ \\
    % \midrule[.1em]
    
    \bottomrule[.2em]
  \end{tabular}
  \label{tab:result_Gossipcop}
\end{table*}

\begin{table*}[t]
  \caption{Training Time and Prediction Quality of Different models on \textit{Politifact} Dataset.}
  \normalsize\centering
  \begin{tabular}{c|cccccccc}

    \toprule[.2em]
    % {} & \multicolumn{6}{c}{\bfseries GNNs} \\
    % \midrule[.1em]
    {\bfseries Metric }& {GAT} & {GCN} & {HAN} & {HGAT} &{DBGNN} &{DBGNN$_{f}$}&{DHGNN}&{DHGNN$_{f}$} \\
    \midrule[.1em]

      {Accuracy} 
      & 0.7405 & 0.7405 & 0.7481 & 0.7710 & 0.7557 & 0.7557 & 0.7634 & 0.7634\\
    \cmidrule(lr){2-9}
      {Precision}
      & 0.7833 & 0.7881 & 0.7949 & 0.8051 & 0.7966 & 0.8017 & 0.8087 & 0.8034\\
    \cmidrule(lr){2-9}
      {Recall} 
      & 0.9216 & 0.9118 & 0.9118 & 0.9314 & 0.9216 & 0.9118 & 0.9118 & 0.9216\\
      \cmidrule(lr){2-9}
      {F1-score}
      & 0.8500 & 0.8551 & 0.8551 & 0.8649 & 0.8594 & 0.8551 & 0.8649 & 0.8651\\
      \cmidrule(lr){2-9}
      {Time(s)} 
      & $201.63\pm9.70$ & $194.88\pm11.05$ & $152.89\pm4.56$ & $19.41\pm4.23$ & $5.99\pm1.40$ & $5.04\pm1.51$ & $8.35\pm2.02$ & $7.90\pm1.34$ \\
    % \midrule[.1em]
    
    \bottomrule[.2em]
  \end{tabular}
  \label{tab:result_Politifact}
\end{table*}

\section{Conclusion and Future Outlook} \label{sec:conclusion}
The field of deep learning has brought tremendous contribution to the society with applications in the computer vision, natural language processing, and many others. The emerging branch of GNN has slowly shown some promising results in tackling various problems such as automating traffic systems, analysing social networks, and discerning mis(dis)information in the social media.

% Our project focuses on the application of dynamic GNN to a timely fake news detection. From our experiments, we showed that by replacing the usual message passing back propagation with a dynamic propagation scheme, we can improve the training time and prediction quality of a vanilla GCN model. Hence, resulting in an overall better model.

% One of the limitations in this project is the lack of readily-available curated data sets with various contextual information. Hence, there is a pressing need for researchers and other stakeholders to gather and share more data sets in various application areas to benchmark the existing and new GNN models. Furthermore, the short duration of this project implies that there are potentially some aspects that we had not consider in our models and experiments. Therefore, a possible approach is to explore more GNN models and learn parts that are useful to improve the current model architecture.

%% the bibliography file.
\bibliographystyle{ACM-Reference-Format}
% \bibliography{draft-base}
\bibliography{ref}

\end{document}